\documentclass[11pt]{article}
\usepackage[margin=1in]{geometry}
\usepackage{times}
\usepackage{natbib}
\usepackage[T1]{fontenc}
\usepackage[utf8]{inputenc}
\usepackage{microtype}
\usepackage{amsmath,amssymb,amsthm,mathtools,bm}
\usepackage{algorithm}
\usepackage{algpseudocode}
\usepackage{booktabs}
\usepackage{array}
\usepackage{graphicx}
\usepackage{xcolor}
\usepackage{enumitem}
\usepackage{tikz}
\usetikzlibrary{arrows.meta,positioning,shapes.geometric}
\usepackage[colorlinks=true,linkcolor=blue!60!black,citecolor=blue!60!black,
            urlcolor=blue!60!black]{hyperref}
\usepackage{cleveref}

\makeatletter
\providecommand{\theHALG@line}{}
\renewcommand{\theHALG@line}{\thealgorithm.\arabic{ALG@line}}
\makeatother

\newtheorem{proposition}{Proposition}
\newtheorem{theorem}{Theorem}
\newtheorem{lemma}{Lemma}
\newtheorem{corollary}{Corollary}
\theoremstyle{definition}
\newtheorem{definition}{Definition}
\newtheorem{example}{Example}
\theoremstyle{remark}
\newtheorem{remark}{Remark}

\newcommand{\R}{\mathbb{R}}
\newcommand{\E}{\mathbb{E}}

\newcommand{\tr}{\mathrm{tr}}
\newcommand{\diag}{\mathrm{diag}}
\newcommand{\rank}{\mathrm{rank}}
\newcommand{\T}{^{\!\top}}
\newcommand{\DPP}{\textsc{DPP}}
\newcommand{\OurMethod}{\textsc{CertDPP}}
\newcommand{\OPT}{\mathrm{OPT}}
\newcommand{\gap}{\Delta}

\definecolor{cBlue}{HTML}{2563eb}
\definecolor{cOrange}{HTML}{ea580c}
\definecolor{cGreen}{HTML}{059669}
\definecolor{cGray}{HTML}{6b7280}

\title{Spectral Certificates and Projection-DPP Rounding\\
for Determinantal MAP Selection}

\author{Richard Yi Da Xu\\
Hong Kong Baptist University \quad TadReamk Limited\\
\texttt{xuyida@hkbu.edu.hk} \quad \texttt{richard@tadreamk.com}}
\date{}

\begin{document}
\maketitle

\begin{abstract}
Selecting a fixed-size subset that maximizes the determinant of a positive
semidefinite kernel is the MAP problem for a size-constrained determinantal
point process and the classical maximum-entropy sampling problem.  It is
NP-hard.  A classical spectral bound states that no feasible subset can exceed
a ceiling given by the sum of the logarithms of the leading eigenvalues of the
kernel.  The same quantity is the exact optimum of the Stiefel relaxation of
the problem, so the continuous problem needs no new optimizer: it is solved by
the leading eigenspace.  We ask instead what this eigenspace says about the
discrete rounding problem.

The leading eigenvectors induce a projection DPP, a distribution that assigns
each subset of the target size a probability equal to the squared volume of
the corresponding rows of the eigenvector basis.  We show a pointwise
inequality: the amount by which any subset falls short of the spectral ceiling
is at most the negative log-probability of that subset under this projection
DPP.  Consequently the integrality gap is at most the min-entropy of the
distribution, with equality when the rank of the kernel equals the subset
size.  An exact projection-DPP sample has expected certified gap at most the
Shannon entropy of the distribution and, with any prescribed confidence,
attains a value within an additive term of the spectral ceiling that grows
only logarithmically with the number of candidate subsets and the inverse
failure probability.  This turns the spectral bound into a constructive
rounding guarantee and identifies the geometry that spectral decay alone
misses: localization of the leading subspace in the coordinate basis.

These results motivate \OurMethod: compute the leading eigenspace
matrix-free, draw one or more projection-DPP samples from the exact target law,
optionally improve the best sample by determinant-increasing swaps, and report
the gap between a verified spectral ceiling and the achieved log-determinant.
When only unverified Ritz values are available, we label the corresponding
quantity a numerical gap estimate rather than a rigorous certificate.
Controlled experiments verify the entropy
identities, compare the rounded solutions with exact MAP on small instances,
and confirm linear scaling in the ground-set size for the rounding stage.  The
certificate can be loose for diffuse eigenspaces; exposing that fact
quantitatively is part of the method rather than a hidden failure mode.
\end{abstract}

\section{Introduction}\label{sec:intro}

Determinantal point processes (\DPP s) provide a probability model for diverse
subset selection.  Given a positive semidefinite kernel $L\in\R^{n\times n}$,
an $L$-ensemble assigns a subset $S$ probability proportional to
$\det(L_S)$ \citep{kulesza2012dpp}.  If $L$ is a Gram matrix, this determinant
is the squared volume spanned by the selected feature vectors.  It is large
when the selected items are individually strong and collectively
nonredundant.

The deterministic size-constrained MAP problem is
\begin{equation}
  \OPT_k
  :=\max_{S\subseteq[n],\ |S|=k}\log\det(L_S).
  \label{eq:map}
\end{equation}
The same optimization problem is known as maximum-entropy sampling in
statistics and experimental design.  It is NP-hard
\citep{ko1995exact,civril2009selecting}, and strong inapproximability results
are known for related determinant-maximization formulations
\citep{ohsaka2021inapproximability}.  Greedy, exchange, continuous-relaxation,
and branch-and-bound methods cover useful regimes, but exact solution remains
combinatorial \citep{chen2018fastdpp,han2017faster,anstreicher2020efficient}.

There is a simple and old upper bound.  If
$\lambda_1(L)\ge\cdots\ge\lambda_n(L)\ge0$, then eigenvalue interlacing gives
\begin{equation}
  \log\det(L_S)\le
  U_k:=\sum_{i=1}^k\log\lambda_i(L)
  \qquad(|S|=k).
  \label{eq:spectral-certificate}
\end{equation}
This spectral bound appeared in early work on maximum-entropy sampling
\citep{ko1995exact} and has since been strengthened by partition, masking, and
convex-optimization bounds \citep{anstreicher2004masked,burer2007factored,
anstreicher2020efficient}.  It is also exactly the optimum of the continuous
Stiefel relaxation
\begin{equation}
  \max_{V\in\R^{n\times k},\ V\T V=I_k}
       \log\det(V\T L V).
  \label{eq:stiefel}
\end{equation}
Thus \eqref{eq:stiefel} is not a difficult nonconvex program, and an iterative
nonlinear solver is unnecessary.  Its optimizer is any basis of the leading
$k$-dimensional eigenspace.

This observation changes the research question.  The useful object is not a
new solver for \eqref{eq:stiefel}, but the relationship between its dense
eigenspace and an actual coordinate subset.  Two questions become central:
\begin{enumerate}[leftmargin=1.7em,itemsep=2pt]
  \item When is the spectral certificate $U_k$ close to the discrete optimum?
  \item Can the leading eigenspace be rounded by a scalable procedure that
        inherits a guarantee from the same certificate?
\end{enumerate}

Our answer uses the projection DPP associated with the leading eigenspace.
Let $U\in\R^{n\times k}$ have orthonormal columns spanning that eigenspace.
The quantities
\[
  p_U(S):=\det(U_S)^2,\qquad |S|=k,
\]
sum to one by Cauchy--Binet and therefore form a probability distribution.
They are precisely the probabilities of the rank-$k$ projection DPP with
kernel $UU\T$.  The determinant $\det(U_S)^2$ measures how much coordinate
volume the subset $S$ captures from the leading subspace.

The central inequality of this paper is
\begin{equation}
  \log\det(L_S)\ge U_k+\log p_U(S).
  \label{eq:central-preview}
\end{equation}
It has several consequences.  First, the best coordinate-volume probability
controls the integrality gap; in the rank-$k$ case it characterizes that gap
exactly.  Second, sampling from $p_U$ makes the classical counting floor
$U_k-\log\binom nk$ algorithmic rather than merely existential.  Third, the
Shannon entropy of $p_U$ bounds the expected gap of the rounded set.  These
statements explain why eigenvalue decay is not enough: the missing variable is
how the leading eigenspace sits relative to the coordinate axes.

\Cref{fig:pipeline} gives the entire argument at a glance.  The top
eigenpairs play two distinct roles.  Their eigenvalues give a scalar upper
bound $U_k$, while their eigenvectors give a distribution over coordinate
subsets.  The pointwise inequality connects these two branches.  It converts
the probability of a sampled subset into a lower bound on its determinant;
evaluating that determinant on the full kernel then produces an a posteriori
optimality certificate.

\begin{figure}[t]
\centering
\resizebox{\linewidth}{!}{%
\begin{tikzpicture}[
  node distance=9mm and 12mm,
  >=Latex,
  every node/.style={font=\small,align=center},
  input/.style={draw=cGray,very thick,rounded corners=2pt,fill=black!3,
                minimum width=28mm,minimum height=12mm},
  spectral/.style={draw=cBlue,very thick,rounded corners=2pt,fill=cBlue!7,
                   minimum width=34mm,minimum height=13mm},
  random/.style={draw=cOrange,very thick,rounded corners=2pt,fill=cOrange!8,
                 minimum width=37mm,minimum height=13mm},
  output/.style={draw=cGreen,very thick,rounded corners=2pt,fill=cGreen!8,
                 minimum width=38mm,minimum height=13mm},
  flow/.style={->,thick}
]
\node[input] (input) {PSD kernel $L$\\target size $k$};
\node[spectral,right=of input] (eig) {leading eigenpairs\\$U,\ \Lambda_{1:k}$};
\node[spectral,above right=7mm and 13mm of eig] (upper)
  {spectral ceiling\\$U_k=\log\det\Lambda_{1:k}$};
\node[random,below right=7mm and 13mm of eig] (dpp)
  {projection DPP\\$p_U(S)=\det(U_S)^2$};
\node[random,right=of dpp] (sample) {exact sample\\$S\sim p_U$};
\node[output,right=of sample] (value) {full-kernel evaluation\\$f(S)=\log\det(L_S)$};
\node[output,above=of value] (cert)
  {a posteriori certificate\\$0\le\OPT_k-f(S)\le U_k-f(S)$};

\draw[flow] (input) -- node[above]{eigensolve} (eig);
\draw[flow,cBlue] (eig) -- (upper);
\draw[flow,cOrange] (eig) -- (dpp);
\draw[flow,cOrange] (dpp) -- node[above]{round} (sample);
\draw[flow,cGreen] (sample) -- (value);
\draw[flow,cGreen] (value) -- (cert);
\draw[flow,cBlue] (upper) -- (cert);
\draw[->,densely dashed,thick,cOrange]
  (dpp.south east) to[bend right=24]
  node[below,sloped,text=cOrange!80!black]
  {$f(S)\ge U_k+\log p_U(S)$} (value.south west);
\end{tikzpicture}%
}
\caption{\textbf{How the certificate-and-rounding argument fits together.}
The eigenvalues and eigenvectors of the same leading eigenspace generate two
branches.  The blue branch is an upper bound on the unknown discrete optimum;
the orange branch is a distribution used to construct a subset.  The dashed
arrow is the central pointwise inequality.  The green branch evaluates the
returned subset on the original kernel and compares it with the blue upper
bound.}
\label{fig:pipeline}
\end{figure}

\paragraph{Contributions.}
The paper makes four focused contributions.
\begin{enumerate}[leftmargin=*]
  \item \textbf{A coordinate-volume interpretation of the spectral bound.}
    We relate every discrete determinant to the projection-DPP probability of
    the same subset through \eqref{eq:central-preview}.  This separates the
    spectral mass $\prod_{i\le k}\lambda_i(L)$ from the coordinate-rounding
    volume $\det(U_S)^2$.
  \item \textbf{Entropy characterizations and guarantees.}
    The spectral integrality gap is bounded by the min-entropy
    $H_\infty(p_U)$ and equals it for rank-$k$ kernels.  Projection-DPP
    rounding has expected gap at most $H(p_U)$ and a universal high-probability
    additive guarantee.
  \item \textbf{A certified matrix-free pipeline.}
    The pipeline combines a leading-eigenspace computation, exact
    projection-DPP rounding, optional monotone local refinement, and the
    candidate-specific certificate $U_k-\log\det(L_S)$.  The guarantee belongs
    to the initial sample and therefore survives any improving refinement.
  \item \textbf{Experiments designed around certificate tightness.}
    We verify the entropy identities in a controlled family, compare with
    exact MAP rather than only with other relaxations, apply equal local-search
    budgets to all initializers, and separately measure the scaling of exact
    projection-DPP rounding.
\end{enumerate}

\paragraph{Scope.}
This is a theory-and-algorithm paper about certificates and rounding.  It does
not claim that the Stiefel relaxation or the spectral bound is new, and it does
not introduce a solver for the fixed-kernel relaxation.  The experiments are
controlled numerical validations, not a production-scale downstream data
curation study.  This narrower scope lets us state precisely what is guaranteed
and where the remaining looseness lies.
This report is intended as a substantial revision and replacement of
arXiv:2606.19411v1: it removes the earlier emphasis on an unnecessary nonlinear
solver for the fixed-kernel relaxation and replaces heuristic rounding claims
with the projection-DPP guarantees developed here.

\section{Reader's Guide and Notation}\label{sec:notation}

This section fixes the notation used throughout the report and explains the
three levels of the argument.  Readers interested mainly in implementation
can read \cref{sec:problem,sec:projection,sec:algorithm} in that order.  Readers
interested in the guarantees should additionally read \cref{sec:theory}; the
fully worked example in \cref{ex:three-by-two} can be used to check every
formula numerically.

\subsection*{Sets, selectors, and principal submatrices}
The ground set is $[n]:=\{1,\ldots,n\}$.  A candidate subset
$S=\{i_1,\ldots,i_k\}$ always has exactly $k$ distinct indices.  We place the
indices in increasing order when writing matrices.  The coordinate selector
is
\begin{equation}
  E_S=[e_{i_1},\ldots,e_{i_k}]\in\R^{n\times k},
  \qquad E_S\T E_S=I_k.
  \label{eq:selector-definition}
\end{equation}
For an $n\times n$ matrix $L$, its principal submatrix on $S$ is
$L_S:=E_S\T L E_S\in\R^{k\times k}$.  For an $n\times k$ matrix $U$, the
notation $U_S:=E_S\T U\in\R^{k\times k}$ means the rows indexed by $S$.
These two uses of the subscript are compatible:
\begin{equation}
  (UU\T)_S=U_SU_S\T.
  \label{eq:submatrix-compatibility}
\end{equation}

\subsection*{Spectral notation and logarithms}
The kernel $L$ is real symmetric and positive semidefinite (PSD), written
$L\succeq0$.  Its eigenvalues are always ordered as
$\lambda_1(L)\ge\cdots\ge\lambda_n(L)\ge0$.  We assume
$\lambda_k(L)>0$ so that the target rank is meaningful.  The leading
eigenvector matrix $U=[u_1,\ldots,u_k]$ has orthonormal columns,
$U\T U=I_k$, and
$\Lambda_{1:k}:=\diag(\lambda_1,\ldots,\lambda_k)$.  All logarithms are
natural logarithms and are therefore measured in nats.  We adopt
$\log0=-\infty$, so a singular selected submatrix has objective value
$-\infty$.

If $k<n$ and $\lambda_k(L)=\lambda_{k+1}(L)$, the maximizing
$k$-dimensional invariant
subspace is not unique.  In that case we fix one such subspace, including a
deterministic eigensolver tie-breaking rule, and all quantities
$p_U,H(p_U)$, and $H_\infty(p_U)$ are conditional on that choice.  They are
basis-invariant within the fixed subspace but need not be invariant across
different choices inside the tied eigenspace.  The scalar ceiling $U_k$ and all
pointwise theorems remain valid for every fixed maximizing choice.  When an
intrinsic leading subspace is desired, one may instead assume the eigengap
$\lambda_k(L)>\lambda_{k+1}(L)$.

\subsection*{The three quantities that should not be confused}
For a returned subset $S$, the analysis uses three different gaps:
\begin{center}
\small
\begin{tabular}{p{.19\linewidth}p{.28\linewidth}p{.43\linewidth}}
\toprule
Quantity & Formula & Interpretation \\
\midrule
Unknown MAP optimum & $\OPT_k=\max_{|T|=k}\log\det(L_T)$
  & Best discrete value; expensive to compute in general.\\
Spectral ceiling & $U_k=\sum_{i=1}^k\log\lambda_i(L)$
  & Efficient upper bound on $\OPT_k$ and exact Stiefel-relaxation value.\\
Reported certificate & $\gap(S)=U_k-\log\det(L_S)$
  & Computable upper bound on the unknown suboptimality
    $\OPT_k-\log\det(L_S)$.\\
\bottomrule
\end{tabular}
\end{center}

\paragraph{Why $\OPT_k\le U_k$.}
Fix any size-$k$ subset $T$, and let
$\mu_1(L_T)\ge\cdots\ge\mu_k(L_T)\ge0$ be the eigenvalues of its principal
submatrix.  Cauchy eigenvalue interlacing gives
\begin{equation}
  \mu_i(L_T)\le\lambda_i(L),
  \qquad i=1,\ldots,k.
  \label{eq:reader-interlacing}
\end{equation}
If $L_T$ is nonsingular, taking logarithms and summing yields
\begin{equation}
  \log\det(L_T)
  =\sum_{i=1}^k\log\mu_i(L_T)
  \le\sum_{i=1}^k\log\lambda_i(L)
  =U_k.
  \label{eq:reader-subset-upper-bound}
\end{equation}
If $L_T$ is singular, the same inequality holds under the convention
$\log\det(L_T)=-\infty$.  Because the inequality holds for every feasible
$T$, taking the maximum over all size-$k$ subsets proves
\begin{equation}
  \boxed{\quad
  \OPT_k=\max_{|T|=k}\log\det(L_T)\le U_k.
  \quad}
  \label{eq:reader-opt-upper-bound}
\end{equation}
Consequently, every returned subset $S$ satisfies the complete ordering
\begin{equation}
  \log\det(L_S)\le\OPT_k\le U_k,
  \label{eq:reader-complete-ordering}
\end{equation}
which is precisely why
$\gap(S)=U_k-\log\det(L_S)$ is an upper bound on the unknown suboptimality
$\OPT_k-\log\det(L_S)$.

Thus a certificate of $0.1$ nats proves that the returned determinant is at
least $e^{-0.1}\approx0.905$ times the optimal determinant.  More generally,
$\gap(S)\le\eta$ implies
\begin{equation}
  \det(L_S)\ge e^{-\eta}\max_{|T|=k}\det(L_T).
  \label{eq:additive-to-multiplicative}
\end{equation}
The certificate is additive in log-determinant units and multiplicative in
determinant units.

\section{Related Work}\label{sec:related}

\paragraph{DPP-MAP and MESP.}
The size-$k$ DPP-MAP problem and maximum-entropy sampling (MESP) share
the same determinant objective.  \citet{ko1995exact} established NP-hardness,
an eigenvalue-interlacing upper bound, and an exact branch-and-bound method.
Subsequent work developed spectral partitions, masked spectral bounds, and
strong convex upper bounds \citep{anstreicher2004masked,burer2007factored,
anstreicher2020efficient}.  This literature is essential context: the quantity
$U_k$ in \eqref{eq:spectral-certificate} is classical.  Our focus is different
from tightening $U_k$ through a stronger optimization relaxation; we use the
leading eigenspace behind $U_k$ to construct and analyze a discrete rounding
distribution.

In machine learning, \DPP s were systematized by \citet{kulesza2012dpp}.
Greedy and accelerated greedy MAP methods operate directly on marginal
determinant gains \citep{chen2018fastdpp,han2017faster}.  The softmax extension
of \citet{gillenwater2012near} relaxes a binary membership vector to a capped
simplex and is exact at the binary vertices.  Design relaxations similarly
optimize weighted log determinants followed by randomized or exchange rounding
\citep{nikolov2015randomized,singh2020doptimal}.  The Stiefel and simplex
relaxations enlarge the discrete feasible set in different ways; neither
feasible-set geometry alone enforces semantic diversity.  In both cases the
kernel and determinant objective encode diversity.

\paragraph{Volume sampling and projection DPPs.}
Volume sampling selects rows with probability proportional to their squared
spanned volume \citep{deshpande2006matrix,deshpande2010efficient}.  When
$U\T U=I_k$ and exactly $k$ rows are selected, volume sampling is the
projection DPP with kernel $UU\T$.  Projection DPPs have fixed cardinality and
their row leverage scores are the marginal inclusion probabilities
\citep{kulesza2012dpp}.  Efficient exact and approximate samplers have been
developed using sequential orthogonalization, intermediate sampling, MCMC, and
rejection sampling \citep{gautier2017zonotope,derezinski2019fast,
barthelme2023faster}.  \citet{belhadji2020column} use a projection DPP for
column subset selection and analyze reconstruction error.  We instead analyze
the DPP-MAP log-determinant value obtained when the projection kernel is the
leading eigenspace of the target kernel $L$.

\paragraph{Maximum-volume submatrices and rank-revealing selection.}
Selecting a maximum-volume row or column submatrix is NP-hard
\citep{civril2009selecting}.  Pivoted and strong rank-revealing QR methods give
deterministic practical alternatives \citep{gu1996efficient}, while randomized
column-subset methods use leverage or volume information.  Our min-entropy
quantity is a probabilistic representation of the same geometry: maximizing
$p_U(S)=\det(U_S)^2$ is exactly the maximum-volume row-submatrix problem for
the leading eigenbasis.

\section{Problem, Relaxation, and Certificate}\label{sec:problem}

Let $L\succeq0$ have rank at least $k$, so at least one size-$k$ principal
minor is positive.  We use the convention $\log\det(L_S)=-\infty$ when
$L_S$ is singular.  Let
\[
  L=U_{1:k}\Lambda_{1:k}U_{1:k}\T+U_{>k}\Lambda_{>k}U_{>k}\T
   =:L_k+R_k,
\]
where $\Lambda_{1:k}=\diag(\lambda_1,\dots,\lambda_k)$ and
$U:=U_{1:k}\in\R^{n\times k}$.  Then $R_k\succeq0$.

\subsection{Why the determinant is a volume objective}
Because $L\succeq0$, there is a feature matrix $B\in\R^{n\times d}$ such
that $L=BB\T$; row $b_i\T$ represents item $i$.  For a subset $S$,
\begin{equation}
  L_S=B_SB_S\T,
  \qquad
  \det(L_S)=\det(B_SB_S\T).
  \label{eq:gram-volume}
\end{equation}
When the selected rows are linearly independent, this determinant is the
squared $k$-dimensional volume of the parallelepiped spanned by them.  If two
selected items point in the same feature direction, the volume shrinks; if
the selected rows are linearly dependent, it is zero.  Thus the objective
combines item magnitude with nonredundancy.  The logarithm is used because it
turns products of eigenvalues into sums and makes determinant ratios into
additive gaps.

\subsection{Embedding a subset in the Stiefel manifold}

For a subset $S=\{i_1,\dots,i_k\}$, let $E_S$ denote the $n\times k$
coordinate selector with columns $e_{i_1},\dots,e_{i_k}$.  It satisfies
$E_S\T E_S=I_k$ and $E_S\T L E_S=L_S$.  Consequently \eqref{eq:stiefel}
is a relaxation of \eqref{eq:map}: every subset produces a feasible Stiefel
matrix, but a general Stiefel matrix may be a dense linear combination of all
$n$ coordinates.  This distinction is the source of the integrality gap.

\begin{proposition}[Classical spectral certificate]\label{prop:spectral}
If $\lambda_k(L)>0$, then
\begin{equation}
  \max_{V\T V=I_k}\log\det(V\T L V)
  =U_k:=\sum_{i=1}^k\log\lambda_i(L),
  \label{eq:stiefel-solution}
\end{equation}
and the maximum is attained by any orthonormal basis of the leading
$k$-dimensional eigenspace.  Hence $\OPT_k\le U_k$.
\end{proposition}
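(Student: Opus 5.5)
The plan is to prove the two inequalities $\le$ and $\ge$ in \eqref{eq:stiefel-solution} separately, and then obtain $\OPT_k\le U_k$ by restricting the Stiefel problem to coordinate selectors.

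\textbf{Upper bound.} Fix any $V\in\R^{n\times k}$ with $V\T V=I_k$ and let $\mu_1\ge\cdots\ge\mu_k\ge0$ be the eigenvalues of $V\T LV$. The Cauchy--Poincar\'e interlacing theorem for compressions by matrices with orthonormal columns gives $\mu_i\le\lambda_i(L)$ for $i=1,\dots,k$. This is the same fact used in \eqref{eq:reader-interlacing}, now applied to a general isometry rather than a selector $E_S$. To justify it, I will complete $V$ to an orthogonal matrix $Q=[V,V_\perp]$. Then $V\T LV$ is a principal submatrix of $Q\T LQ$, and $Q\T LQ$ has the same spectrum as $L$. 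Ordinary Cauchy interlacing therefore applies.

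If $V\T LV$ is singular, its log-determinant is $-\infty$ and there is nothing to prove. Otherwise all $\mu_i>0$, and summing logarithms gives $\log\det(V\T LV)=\sum_i\log\mu_i\le\sum_i\log\lambda_i(L)=U_k$. Here the hypothesis $\lambda_k(L)>0$ is what makes $U_k$ finite.

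\textbf{Attainment.} Take $V=U$, the leading eigenvector matrix. Then $U\T LU=\Lambda_{1:k}$, because $LU=U\Lambda_{1:k}$ and $U\T U=I_k$. Hence $\log\det(U\T LU)=U_k$. Now let $V=UW$ be any other orthonormal basis of the same leading $k$-dimensional invariant subspace, with $W$ orthogonal. Then $V\T LV=W\T\Lambda_{1:k}W$, whose determinant is again $\prod_{i\le k}\lambda_i$. So every orthonormal basis of that subspace is a maximizer.

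In the tied case $\lambda_k=\lambda_{k+1}$, the argument is the same for any fixed choice of $k$-dimensional invariant subspace whose eigenvalues are $\lambda_1,\dots,\lambda_k$. This is consistent with the convention in \cref{sec:notation}.

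\textbf{Consequence for the discrete problem.} For every $|S|=k$ the selector $E_S$ is Stiefel-feasible and satisfies $E_S\T LE_S=L_S$. Therefore $\log\det(L_S)\le U_k$ for every such $S$, and taking the maximum over $S$ gives $\OPT_k\le U_k$, as in \eqref{eq:reader-opt-upper-bound}.

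The only step that is more than bookkeeping is the interlacing inequality for a general isometry $V$. I would handle it either by the orthogonal-completion reduction above or by the Courant--Fischer min-max characterization. For the latter, note that $\mu_i$ is the max-min of the Rayleigh quotient over $i$-dimensional subspaces of $\mathrm{range}(V)$, and these form a subset of all $i$-dimensional subspaces of $\R^n$, so $\mu_i\le\lambda_i(L)$. Everything else is routine.
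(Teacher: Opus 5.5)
Your proposal is correct and matches the paper's proof step for step: Poincar\'e separation $\mu_i\le\lambda_i(L)$ gives the upper bound, $V=UQ$ with $Q$ orthogonal gives attainment, and feasibility of the selectors $E_S$ gives $\OPT_k\le U_k$. The only addition is that you justify Poincar\'e separation (by orthogonal completion or by Courant--Fischer), which the paper simply cites.
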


\begin{proof}
Fix a feasible $V$ and let
$\mu_1\ge\cdots\ge\mu_k\ge0$ be the eigenvalues of the compressed matrix
$V\T L V$.  Poincar\'e separation gives
\begin{equation}
  \mu_i\le\lambda_i(L),\qquad i=1,\ldots,k.
  \label{eq:poincare-coordinatewise}
\end{equation}
Multiplying these $k$ inequalities yields
\begin{equation}
  \det(V\T L V)=\prod_{i=1}^k\mu_i
  \le\prod_{i=1}^k\lambda_i(L).
  \label{eq:compressed-determinant-bound}
\end{equation}
For $V=UQ$, where $Q\in\R^{k\times k}$ is orthogonal,
\begin{align*}
  V\T L V
  &=Q\T U\T L UQ
    =Q\T\Lambda_{1:k}Q,\\
  \det(V\T L V)
  &=\det(\Lambda_{1:k})
    =\prod_{i=1}^k\lambda_i(L),
\end{align*}
so the upper bound is attained.  Taking logarithms gives
\eqref{eq:stiefel-solution}.  Finally, every coordinate selector $E_S$ is
feasible and satisfies $E_S\T L E_S=L_S$, so maximizing over all Stiefel
matrices can only exceed the discrete maximum.
\end{proof}

The proposition explains why no iterative nonlinear fixed-point method is
needed for the fixed kernel studied here.  One needs only the top $k$
eigenpairs, which can be computed without forming a dense $n\times n$ matrix
when products $x\mapsto Lx$ are available.

\begin{definition}[A posteriori spectral certificate]\label{def:certificate}
For any returned size-$k$ subset $S$, define
\begin{equation}
  \gap(S):=U_k-\log\det(L_S).
  \label{eq:posterior-gap}
\end{equation}
Then
\begin{equation}
  0\le \OPT_k-\log\det(L_S)\le\gap(S).
  \label{eq:posterior-certificate}
\end{equation}
\end{definition}

The certificate is algorithm-independent: it applies to greedy, exchange,
simplex rounding, projection-DPP rounding, or any externally supplied subset.
Its usefulness is empirical and instance-dependent.  A small $\gap(S)$ proves
near-optimality; a large value honestly reports that this relaxation cannot
certify the candidate.

To see the direction of the inequalities, write
\begin{equation}
  \underbrace{\log\det(L_S)}_{\text{returned value}}
  \ \le\ 
  \underbrace{\OPT_k}_{\text{unknown discrete optimum}}
  \ \le\ 
  \underbrace{U_k}_{\text{computable spectral ceiling}}.
  \label{eq:certificate-sandwich}
\end{equation}
Subtracting the leftmost term from all three quantities gives
\eqref{eq:posterior-certificate}.  Importantly, $\gap(S)$ is not an estimate
of the error; it is a valid worst-case upper bound on the error whenever the
eigenvalue ceiling is computed rigorously.

\section{The Leading-Eigenspace Projection DPP}\label{sec:projection}

The leading eigenvectors contain more information than the scalar $U_k$.
Their $k\times k$ row minors describe how well the leading subspace can be
represented by $k$ coordinates.

\begin{definition}[Projection-volume distribution]\label{def:projection}
For $U\in\R^{n\times k}$ with $U\T U=I_k$, define
\begin{equation}
  p_U(S):=\det(U_S)^2=\det\bigl((UU\T)_S\bigr),
  \qquad |S|=k.
  \label{eq:projection-probability}
\end{equation}
\end{definition}

The determinant in \eqref{eq:projection-probability} is well-defined because
$U_S$ is square: the projection DPP has rank $k$ and we consider subsets of
exactly $k$ rows.  Expanding the second expression gives
\begin{equation}
  \det\bigl((UU\T)_S\bigr)
  =\det(U_SU_S\T)
  =\det(U_S)\det(U_S\T)
  =\det(U_S)^2.
  \label{eq:projection-minor-equality}
\end{equation}
Thus $p_U(S)$ is nonnegative.  It is zero precisely when the selected rows of
$U$ are linearly dependent.

By Cauchy--Binet,
\begin{equation}
  \sum_{|S|=k}p_U(S)
  =\sum_{|S|=k}\det(U_S)^2
  =\det(U\T U)=1.
  \label{eq:cauchy-binet-prob}
\end{equation}
Thus $p_U$ is a probability distribution: the projection DPP with marginal
kernel $K=UU\T$.  A projection-DPP sample has size $k$ almost surely.  The
marginal probability of including item $i$ is
$K_{ii}=\|e_i\T U\|_2^2$, the usual row leverage score.  Unlike independent
leverage sampling, the joint distribution contains determinant-induced
negative dependence and assigns zero probability to linearly dependent row
sets.

The marginal identity should be read carefully: $K_{ii}$ is the probability
that item $i$ appears, not the probability of a whole subset.  In general,
sampling $k$ distinct indices independently from weights proportional to
$K_{ii}$ does \emph{not} produce $p_U$.  The determinant couples the choices:
once selected rows already span a direction, another row in the same
direction has little or no conditional probability.

The distribution does not depend on the chosen basis: replacing $U$ by $UQ$
with $Q\T Q=I$ leaves $p_U(S)$ unchanged.  It is therefore an invariant of the
leading subspace.  Indeed,
\begin{equation}
  \det((UQ)_S)^2
  =\det(U_SQ)^2
  =\det(U_S)^2\det(Q)^2
  =\det(U_S)^2.
  \label{eq:basis-invariance}
\end{equation}

\begin{definition}[Roundability entropies]\label{def:entropies}
Let $\mathcal S_k=\{S\subseteq[n]:|S|=k\}$.  Define
\begin{align}
  H(p_U)&:=-\sum_{S\in\mathcal S_k}p_U(S)\log p_U(S),
  \label{eq:shannon}\\
  H_\infty(p_U)&:=-\log\max_{S\in\mathcal S_k}p_U(S).
  \label{eq:minentropy}
\end{align}
We use $0\log0=0$.  Since $p_U$ is supported on at most $\binom nk$ sets,
\begin{equation}
  0\le H_\infty(p_U)\le H(p_U)\le\log\binom nk.
  \label{eq:entropy-chain}
\end{equation}
\end{definition}

Low entropy means that a small number of coordinate subsets capture most of
the leading-subspace volume.  High entropy means that volume is diffuse over
many subsets.  This is the relevant geometry for rounding.

The two entropies answer different operational questions.  If
$p_{\max}:=\max_Sp_U(S)$, then
$H_\infty(p_U)=-\log p_{\max}$ measures the penalty of the single best
coordinate representation.  The Shannon entropy
$H(p_U)=\E_{S\sim p_U}[-\log p_U(S)]$ measures the average penalty paid by one
random draw.  Since an average cannot be smaller than the minimum value of
$-\log p_U(S)$, $H_\infty\le H$; since no distribution on
$\binom nk$ outcomes has entropy exceeding the logarithm of its support size,
$H\le\log\binom nk$.

\section{Main Results: Gap Geometry and Constructive Rounding}
\label{sec:theory}

\begin{theorem}[Spectral mass times coordinate volume]\label{thm:pointwise}
Let $L\succeq0$ with $\lambda_k(L)>0$, let $U$ contain its leading $k$
eigenvectors, and let $U_k$ be defined by
\eqref{eq:spectral-certificate}.  For every $S\in\mathcal S_k$,
\begin{equation}
  \boxed{\quad
  \det(L_S)\ge
  \left(\prod_{i=1}^k\lambda_i(L)\right)p_U(S)
  \quad}
  \label{eq:pointwise-det}
\end{equation}
or, equivalently whenever $p_U(S)>0$,
\begin{equation}
  \log\det(L_S)\ge U_k+\log p_U(S).
  \label{eq:pointwise-log}
\end{equation}
If $\rank(L)=k$, equality holds for every $S$.
\end{theorem}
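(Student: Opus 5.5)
The plan is to use the splitting $L=L_k+R_k$ from \cref{sec:problem} together with monotonicity of the determinant under the Loewner order. Since $R_k\succeq0$, every principal submatrix satisfies $(R_k)_S=E_S\T R_kE_S\succeq0$. Hence
\[
  L_S=(L_k)_S+(R_k)_S\succeq(L_k)_S .
\]
For PSD matrices $A\succeq B\succeq0$ we have $\det A\ge\det B$. The quickest justification is to note that if $B$ is singular the claim is trivial. Otherwise $B^{-1/2}AB^{-1/2}\succeq I$, so all its eigenvalues are at least $1$. This gives $\det(L_S)\ge\det((L_k)_S)$.

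Next we compute the right-hand side exactly. Using the compatibility rule \eqref{eq:submatrix-compatibility} in its weighted form,
\[
  (L_k)_S=E_S\T U\Lambda_{1:k}U\T E_S=U_S\Lambda_{1:k}U_S\T .
\]
Since $U_S$ is square ($k\times k$), multiplicativity of the determinant gives
\[
  \det((L_k)_S)=\det(U_S)^2\det(\Lambda_{1:k})=\Bigl(\prod_{i=1}^k\lambda_i(L)\Bigr)p_U(S),
\]
by \cref{def:projection}. This proves \eqref{eq:pointwise-det}.

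The log form \eqref{eq:pointwise-log} follows by taking logarithms, which is legitimate when $p_U(S)>0$ because $\lambda_k(L)>0$ makes the right side positive. When $p_U(S)=0$ the inequality is vacuous under the convention $\log0=-\infty$.

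For the equality case, $\rank(L)=k$ together with $\lambda_k(L)>0$ forces $\lambda_{k+1}=\cdots=\lambda_n=0$. Then $R_k=0$, so $L=L_k$ and the single inequality step becomes an identity for every $S$.

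No step is a real obstacle. The only point needing care is the determinant monotonicity $A\succeq B\succeq0\Rightarrow\det A\ge\det B$, which should cover the case where $(L_k)_S$ is singular; that case is immediate since $\det(L_S)\ge0$. It is also worth confirming that the argument does not depend on how ties at $\lambda_k=\lambda_{k+1}$ are resolved. Any choice of leading subspace still gives $R_k\succeq0$, so the bound holds for every fixed choice, consistent with the notation section.
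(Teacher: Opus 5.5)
Your proposal is correct and follows essentially the same route as the paper: split $L=L_k+R_k$ with $R_k\succeq0$, restrict to $S$ to get $L_S\succeq U_S\Lambda_{1:k}U_S\T$, apply determinant monotonicity, and factor $\det(U_S\Lambda_{1:k}U_S\T)=p_U(S)\prod_{i\le k}\lambda_i(L)$. The paper likewise treats $p_U(S)=0$ separately via $L_S\succeq0$, and gets the rank-$k$ equality from $R_k=0$.
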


\begin{proof}
We separate the argument into explicit steps.
\begin{enumerate}[leftmargin=2.1em,itemsep=5pt]
  \item \emph{Split off the leading spectral component.}
  Write
  \begin{equation}
    L=L_k+R_k,
    \qquad
    L_k=U\Lambda_{1:k}U\T,
    \qquad R_k\succeq0.
    \label{eq:proof-spectral-split}
  \end{equation}

  \item \emph{Restrict the split to the selected coordinates.}
  Premultiplying and postmultiplying by $E_S\T$ and $E_S$ gives
  \begin{equation}
    L_S
    =U_S\Lambda_{1:k}U_S\T+(R_k)_S,
    \qquad (R_k)_S\succeq0.
    \label{eq:proof-principal-split}
  \end{equation}
  Hence $L_S\succeq U_S\Lambda_{1:k}U_S\T$.

  \item \emph{Use determinant monotonicity.}
  Suppose first that $p_U(S)>0$.  Then $U_S$ is nonsingular and
  $A:=U_S\Lambda_{1:k}U_S\T\succ0$.  With
  $C:=(R_k)_S\succeq0$,
  \begin{align}
    \det(L_S)
    &=\det(A+C)\notag\\
    &=\det(A)\det\!\left(I_k+A^{-1/2}CA^{-1/2}\right)
      \ge\det(A),
    \label{eq:proof-monotonicity}
  \end{align}
  because every eigenvalue of the second factor is at least one.

  \item \emph{Factor the leading-subspace determinant.}
  Using multiplicativity and $\det(U_S\T)=\det(U_S)$,
  \begin{align}
    \det(A)
    &=\det(U_S)\det(\Lambda_{1:k})\det(U_S\T)\notag\\
    &=\det(U_S)^2\prod_{i=1}^k\lambda_i(L)
      =p_U(S)\prod_{i=1}^k\lambda_i(L).
    \label{eq:proof-factorization}
  \end{align}
\end{enumerate}
Combining the last two steps proves \eqref{eq:pointwise-det}.  Taking logs
gives \eqref{eq:pointwise-log}.  If $p_U(S)=0$, the right-hand side of the
determinant inequality is zero and the inequality follows from $L_S\succeq0$;
the log form is then interpreted as the vacuous lower bound $-\infty$.  When
$\rank(L)=k$, the tail $R_k$ vanishes, so
$L_S=U_S\Lambda_{1:k}U_S\T$ and equality holds for every $S$.
\end{proof}

The theorem cleanly separates two effects.  The eigenvalues determine the
best unconstrained $k$-dimensional volume, while $p_U(S)$ measures the price of
representing that subspace using the coordinate subset $S$.  The neglected
spectral tail $R_k$ can only improve the discrete determinant above this
leading-subspace floor.

\subsection{A fully worked three-item, two-selection example}

\begin{example}[Uniform coordinate volumes in a rank-two kernel]
\label{ex:three-by-two}
Let
\begin{equation}
  U=
  \begin{bmatrix}
    1/\sqrt2 & 1/\sqrt6\\
   -1/\sqrt2 & 1/\sqrt6\\
    0        & -2/\sqrt6
  \end{bmatrix},
  \qquad
  \Lambda=\begin{bmatrix}9&0\\0&4\end{bmatrix},
  \qquad
  L=U\Lambda U\T.
  \label{eq:worked-example-definition}
\end{equation}
The two columns of $U$ have unit norm and inner product zero, so
$U\T U=I_2$.  Direct multiplication gives
\begin{equation}
  L=
  \begin{bmatrix}
    31/6 & -23/6 & -4/3\\
    -23/6 & 31/6 & -4/3\\
    -4/3 & -4/3 & 8/3
  \end{bmatrix}.
  \label{eq:worked-example-kernel}
\end{equation}
There are only $\binom32=3$ candidate subsets.  Every quantity can therefore
be enumerated:
\begin{center}
\small
\begin{tabular}{ccccc}
\toprule
$S$ & $U_S$ & $\det(U_S)$ & $p_U(S)$ & $\det(L_S)$\\
\midrule
$\{1,2\}$ &
$\begin{bmatrix}1/\sqrt2&1/\sqrt6\\-1/\sqrt2&1/\sqrt6\end{bmatrix}$
& $1/\sqrt3$ & $1/3$ & $12$\\[8pt]
$\{1,3\}$ &
$\begin{bmatrix}1/\sqrt2&1/\sqrt6\\0&-2/\sqrt6\end{bmatrix}$
& $-1/\sqrt3$ & $1/3$ & $12$\\[8pt]
$\{2,3\}$ &
$\begin{bmatrix}-1/\sqrt2&1/\sqrt6\\0&-2/\sqrt6\end{bmatrix}$
& $1/\sqrt3$ & $1/3$ & $12$\\
\bottomrule
\end{tabular}
\end{center}
The probabilities sum to one, exactly as Cauchy--Binet predicts.  The
spectral ceiling is
\begin{equation}
  U_2=\log9+\log4=\log36.
  \label{eq:worked-example-ceiling}
\end{equation}
For every subset, the pointwise theorem reads
\begin{equation}
  \det(L_S)=12
  =36\left(\frac13\right)
  =\det(\Lambda)p_U(S),
  \label{eq:worked-example-pointwise}
\end{equation}
or, after taking logarithms,
\begin{equation}
  \log12=\log36+\log(1/3).
  \label{eq:worked-example-log}
\end{equation}
Because the kernel has rank $k=2$, equality is expected.  The discrete
optimum is $\OPT_2=\log12$, while
\begin{equation}
  U_2-\OPT_2=\log36-\log12=\log3.
  \label{eq:worked-example-gap}
\end{equation}
The distribution is uniform on three sets, so
$H_\infty(p_U)=H(p_U)=\log3$.  Thus the integrality gap, min-entropy, and
expected projection-DPP certificate are all the same number in this example.
One exact projection-DPP draw returns each pair with probability $1/3$, and
every returned pair has reported certificate $\gap(S)=\log3$.
\end{example}

This example also clarifies the word ``projection.''  The two-dimensional
leading eigenspace is a plane in $\R^3$.  No pair of coordinate axes is
preferred: all three $2\times2$ row minors have the same squared area.  The
continuous optimizer captures area $36$, while requiring two coordinate rows
costs a factor of $1/3$.

\subsection{From the pointwise inequality to an integrality-gap bound}
The \emph{integrality gap} of the spectral relaxation is the difference
between the continuous ceiling and the best discrete value:
\begin{equation}
  g_k(L):=U_k-\OPT_k\ge0.
  \label{eq:integrality-gap-definition}
\end{equation}
It is a property of the instance $(L,k)$, whereas $\gap(S)$ is a property of
a particular returned subset.  Since $\OPT_k\ge\log\det(L_S)$ for every
$S$, one always has
\begin{equation}
  0\le g_k(L)\le\gap(S).
  \label{eq:instance-vs-candidate-gap}
\end{equation}
To obtain the strongest consequence of the pointwise theorem, choose the
subset with the largest projection probability.  This is exactly what turns
$p_{\max}$ into min-entropy.

\begin{corollary}[Min-entropy controls the integrality gap]
\label{cor:minentropy-gap}
Under the assumptions of \cref{thm:pointwise},
\begin{equation}
  0\le U_k-\OPT_k\le H_\infty(p_U)\le\log\binom nk.
  \label{eq:minentropy-gap}
\end{equation}
If $\rank(L)=k$, then
\begin{equation}
  U_k-\OPT_k=H_\infty(p_U).
  \label{eq:rankk-exact-gap}
\end{equation}
\end{corollary}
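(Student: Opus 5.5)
The plan is to apply \cref{thm:pointwise} to the single subset that maximizes the projection probability, and then read off the entropy chain \eqref{eq:entropy-chain}. The leftmost inequality $0\le U_k-\OPT_k$ is just $\OPT_k\le U_k$ from \cref{prop:spectral}. Likewise, the rightmost inequality $H_\infty(p_U)\le\log\binom nk$ is already part of \eqref{eq:entropy-chain}.

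For the middle inequality, choose $S^\star\in\arg\max_{S\in\mathcal S_k}p_U(S)$, so that $p_U(S^\star)=p_{\max}=e^{-H_\infty(p_U)}$. We need $p_{\max}>0$. This holds because $p_U$ is a probability distribution by Cauchy--Binet \eqref{eq:cauchy-binet-prob}, so its largest atom is at least $1/\binom nk>0$. The log form \eqref{eq:pointwise-log} of \cref{thm:pointwise} therefore applies to $S^\star$ and gives
\[
\OPT_k\ \ge\ \log\det(L_{S^\star})\ \ge\ U_k+\log p_{\max}\ =\ U_k-H_\infty(p_U).
\]
Rearranging yields $U_k-\OPT_k\le H_\infty(p_U)$.

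For the rank-$k$ equality, \cref{thm:pointwise} gives $\det(L_S)=\bigl(\prod_{i\le k}\lambda_i\bigr)p_U(S)$ for every $S$. Since $\prod_{i\le k}\lambda_i>0$ is a constant positive factor, maximizing $\det(L_S)$ over $S\in\mathcal S_k$ is the same as maximizing $p_U(S)$. Hence $\max_S\det(L_S)=\bigl(\prod_{i\le k}\lambda_i\bigr)p_{\max}$. Taking logarithms gives $\OPT_k=U_k-H_\infty(p_U)$, and this maximum is finite because $p_{\max}>0$. Subsets with $p_U(S)=0$ have $\det(L_S)=0$ and do not affect the maximum.

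There is no real obstacle in this argument. The only point that needs care is making sure the log form is used only where $p_U(S)>0$, and positivity of $p_{\max}$ settles this.
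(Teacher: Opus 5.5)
Your proof is correct and follows essentially the same route as the paper: apply \cref{thm:pointwise} to a maximizer of $p_U$, combine with $\OPT_k\le U_k$ and the entropy chain, and in the rank-$k$ case use pointwise equality to identify maximizing $\det(L_S)$ with maximizing $p_U(S)$. Your explicit check that $p_{\max}\ge 1/\binom nk>0$, which justifies using the log form, is a detail the paper leaves implicit.
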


\begin{proof}
Let $S_\infty\in\arg\max_S p_U(S)$.  Apply the pointwise theorem to this
specific set:
\begin{align}
  \OPT_k
  &\ge\log\det(L_{S_\infty})
    &&\text{because $\OPT_k$ is the maximum over all sets},\notag\\
  &\ge U_k+\log p_U(S_\infty)
    &&\text{by \cref{thm:pointwise}},\notag\\
  &=U_k+\log\max_Sp_U(S)
    =U_k-H_\infty(p_U).
  \label{eq:minentropy-proof-chain}
\end{align}
Rearranging gives $U_k-\OPT_k\le H_\infty(p_U)$.  Combine this with
$\OPT_k\le U_k$ and \eqref{eq:entropy-chain}.  If $L$ has rank $k$, the
pointwise relation is an equality, so maximizing $\det(L_S)$ is exactly the
same as maximizing $p_U(S)$, and the gap is exactly
$-\log\max_Sp_U(S)$.
\end{proof}

The universal counting floor
$\OPT_k\ge U_k-\log\binom nk$ follows immediately, but
\cref{cor:minentropy-gap} is sharper conceptually and often numerically: the
width is determined by the actual leading eigenspace, not only by $n$ and
$k$.

Exponentiating the corollary gives a determinant-scale statement that is
sometimes easier to interpret:
\begin{equation}
  \max_{|S|=k}\det(L_S)
  \ge e^{-H_\infty(p_U)}\prod_{i=1}^k\lambda_i(L)
  =\left(\max_{|S|=k}p_U(S)\right)
    \prod_{i=1}^k\lambda_i(L).
  \label{eq:minentropy-multiplicative}
\end{equation}

\begin{theorem}[Projection-DPP rounding guarantees]\label{thm:rounding}
Draw $S\sim p_U$, the projection DPP of the leading eigenspace.  Then:
\begin{enumerate}[leftmargin=*,itemsep=4pt]
  \item \textbf{Expected certificate:}
  \begin{equation}
    \E\!\left[U_k-\log\det(L_S)\right]
    \le H(p_U)\le\log\binom nk.
    \label{eq:expected-gap}
  \end{equation}
  \item \textbf{High-probability value:} for every $\delta\in(0,1)$,
  \begin{equation}
    \Pr\!\left[
      \log\det(L_S)
      \ge U_k-\log\!\left(\frac{\binom nk}{\delta}\right)
    \right]\ge1-\delta.
    \label{eq:high-prob}
  \end{equation}
\end{enumerate}
If a post-processing method returns $S'$ with
$\log\det(L_{S'})\ge\log\det(L_S)$, both guarantees remain valid with $S'$ in
place of $S$.
\end{theorem}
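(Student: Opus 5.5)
Both parts follow from the pointwise inequality of \cref{thm:pointwise}, read as a statement about the random variable $\gap(S)=U_k-\log\det(L_S)$ for $S\sim p_U$. Since $p_U(S)>0$ almost surely under the sampling law (sets with $p_U(S)=0$ are never drawn), \eqref{eq:pointwise-log} applies with probability one and gives the almost-sure bound
\begin{equation*}
  0\le \gap(S)\le -\log p_U(S).
\end{equation*}
Here the lower bound is \eqref{eq:reader-complete-ordering}. In particular $\gap(S)$ is finite almost surely, so no $-\infty$ issues arise in the expectation.

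For the expected certificate, I will take expectations of the almost-sure bound under $p_U$. The right-hand side has expectation $\sum_{S}p_U(S)\bigl(-\log p_U(S)\bigr)$, where the sum runs over the support, and this equals $H(p_U)$ by \eqref{eq:shannon} with the convention $0\log0=0$. The remaining inequality $H(p_U)\le\log\binom nk$ is already recorded in \eqref{eq:entropy-chain}.

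For the high-probability value, the event $\log\det(L_S)<U_k-\log(\binom nk/\delta)$ is contained in the event $\log p_U(S)<-\log(\binom nk/\delta)$, that is, $p_U(S)<\delta/\binom nk$. I will then use a union-type counting bound. The probability of drawing a set with $p_U(S)<\delta/\binom nk$ is the sum of $p_U(S)$ over such sets, and there are at most $\binom nk$ of them. The sum is therefore strictly less than $\delta$ (or the set is empty), which gives the failure probability at most $\delta$. This counting step is the only place where a little care is needed: it must be a count over the support rather than a Markov argument, since Markov applied to $-\log p_U$ would only give a $H/t$ tail.

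For post-processing, if $\log\det(L_{S'})\ge\log\det(L_S)$, then $\gap(S')\le\gap(S)$ pointwise. Hence the expectation bound and the high-probability event transfer monotonically to $S'$; $S'$ may depend on $S$ and on extra randomness without affecting the argument. I expect no genuine obstacle: the main points to state cleanly are the almost-sure positivity of $p_U(S)$ and the finite-support counting bound.
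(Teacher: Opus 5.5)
Your proposal is correct and matches the paper's proof essentially step for step. Both use the pointwise bound $\gap(S)\le-\log p_U(S)$ on the support of $p_U$, take expectations to get $H(p_U)$, bound the failure probability by counting the at most $\binom nk$ sets with $p_U(S)<\delta/\binom nk$, and transfer to $S'$ via $\gap(S')\le\gap(S)$. Your explicit handling of an empty bad set, where the paper's strict inequality would read $0<0$, is a small but welcome tightening.
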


\begin{proof}
For every set in the support of $p_U$, rearranging the pointwise inequality
gives
\begin{equation}
  U_k-\log\det(L_S)\le-\log p_U(S).
  \label{eq:samplewise-surprisal-bound}
\end{equation}
The quantity on the right is the self-information, or surprisal, of the
sampled set.  Taking expectations gives
\begin{align}
  \E_{S\sim p_U}[U_k-\log\det(L_S)]
  &\le\E_{S\sim p_U}[-\log p_U(S)]\notag\\
  &=-\sum_{S\in\mathcal S_k}p_U(S)\log p_U(S)
    =H(p_U),
  \label{eq:expected-proof-expanded}
\end{align}
which proves \eqref{eq:expected-gap}.

Let $N=\binom nk$ and define
$\mathcal B=\{S:p_U(S)<\delta/N\}$.  Then
\[
  \Pr(S\in\mathcal B)
  =\sum_{S\in\mathcal B}p_U(S)
  <|\mathcal B|\frac{\delta}{N}\le\delta.
\]
Outside $\mathcal B$, $\log p_U(S)\ge-\log(N/\delta)$, and
\eqref{eq:pointwise-log} gives \eqref{eq:high-prob}.  Monotone
post-processing can only decrease the certified gap.
\end{proof}

The high-probability argument uses only a counting fact.  There are at most
$N$ bad sets and each has probability less than $\delta/N$, so their total
probability is less than $\delta$.  It does not assume independence among
items within a DPP sample.

\begin{corollary}[Best of several independent samples]
\label{cor:multiple-samples}
Let $S_1,\ldots,S_r$ be independent draws from $p_U$, and return
$S_\star\in\arg\max_j\log\det(L_{S_j})$.  For any $\delta\in(0,1)$,
\begin{equation}
  \Pr\!\left[
    \log\det(L_{S_\star})
    \ge U_k-\log\!\left(\frac{\binom nk}{\delta^{1/r}}\right)
  \right]\ge1-\delta.
  \label{eq:best-of-r-bound}
\end{equation}
Moreover,
$\E[U_k-\log\det(L_{S_\star})]\le H(p_U)$.
\end{corollary}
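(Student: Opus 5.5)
The corollary follows from \cref{thm:rounding} applied to each draw separately, plus independence across draws. Write $N=\binom nk$ and set $\delta_0:=\delta^{1/r}\in(0,1)$. For each $j$, define the bad event
\[
  F_j=\Bigl\{\log\det(L_{S_j})<U_k-\log(N/\delta_0)\Bigr\}.
\]
Part 2 of \cref{thm:rounding}, applied with confidence parameter $\delta_0$, gives $\Pr(F_j)\le\delta_0$.

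The first step is to relate the best-of-$r$ value to these events. Since $S_\star$ maximizes $\log\det(L_{S_j})$ over $j$, the bound for $S_\star$ fails only if it fails for every draw. So the failure event for $S_\star$ is contained in $\bigcap_j F_j$.

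The second step uses independence. Each $F_j$ is a function of $S_j$ alone, so the events are independent. Hence
\[
  \Pr\Bigl(\bigcap_j F_j\Bigr)=\prod_j\Pr(F_j)\le\delta_0^r=\delta.
\]
This gives \eqref{eq:best-of-r-bound}.

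For the expectation bound, I would use that $\log\det(L_{S_\star})\ge\log\det(L_{S_1})$ pointwise. Therefore
\[
  U_k-\log\det(L_{S_\star})\le U_k-\log\det(L_{S_1}).
\]
Taking expectations and applying part 1 of \cref{thm:rounding} to $S_1\sim p_U$ yields the bound $H(p_U)$. This is exactly the monotone post-processing clause of that theorem, with $S'=S_\star$.

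There is no real obstacle, only two points of care. First, the bad event must be a function of the single draw $S_j$, so that independence applies; it is, because the threshold is deterministic. Second, the $-\infty$ convention needs attention. Draws from $p_U$ lie in its support almost surely, so $\log\det(L_{S_j})$ is finite almost surely by \cref{thm:pointwise}. The expectation is therefore well defined: the integrand is nonnegative, and it is bounded by the surprisal of $S_1$.
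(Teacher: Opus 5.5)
Your proof is correct and follows the paper's own argument: you apply the single-sample bound with confidence $\delta^{1/r}$, use independence to bound the probability that every draw misses the threshold by $\delta$, and bound the expected gap of $S_\star$ by that of $S_1$. Your extra remarks on why the events $F_j$ are independent and on handling the $-\infty$ convention are sound refinements of details the paper leaves implicit.
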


\begin{proof}
Apply the one-sample bad-set argument with
$\alpha=\delta^{1/r}$.  Each sample misses the corresponding value threshold
with probability at most $\alpha$.  Independence makes the probability that
all $r$ samples miss it at most $\alpha^r=\delta$.  The returned best sample
therefore satisfies \eqref{eq:best-of-r-bound}.  Its gap is the minimum of the
$r$ individual gaps, hence is no larger in expectation than the gap of the
first draw, which is at most $H(p_U)$.
\end{proof}

\begin{remark}[What is and is not guaranteed]
The theorem guarantees a value relative to the spectral upper bound, not that
one random sample beats greedy on every instance.  When $H(p_U)$ is large,
the guarantee is correspondingly weak.  Multiple independent samples and
local improvement are practical ways to search the high-volume portion of
the distribution, while $\gap(S)$ reports the quality that was actually
certified.
\end{remark}

\subsection{Why spectral decay alone is insufficient}\label{sec:counterexample}

Consider $k=1$ and
\[
  L=\lambda uu\T+\epsilon I,
  \qquad u=\frac1{\sqrt n}\mathbf1,
  \qquad \lambda\gg n\epsilon>0.
\]
The leading eigenvalue is $\lambda+\epsilon$ and all other eigenvalues are
$\epsilon$, so the spectrum has an arbitrarily sharp drop.  Nevertheless,
\[
  U_1=\log(\lambda+\epsilon),
  \qquad
  \OPT_1=\log(\lambda/n+\epsilon),
\]
and the gap approaches $\log n$.  Here $p_U(\{i\})=1/n$ for every item, so
$H_\infty(p_U)=H(p_U)=\log n$.  The entropy description is exact, whereas
spectral decay alone predicts no rounding difficulty.

At the other extreme, if the leading eigenspace is spanned by $k$ coordinate
vectors, $p_U$ is a point mass, both entropies are zero, and the relaxation is
tight.  These examples justify treating eigenspace roundability as a first-class
quantity.

\section{The Certified Rounding Pipeline}\label{sec:algorithm}

\OurMethod\ uses the theory above as an algorithm rather than as an existence
argument.  It returns both a subset and a certificate.

\begin{algorithm}[H]
\caption{\OurMethod: spectral certificate with projection-DPP rounding}
\label{alg:certdpp}
\begin{algorithmic}[1]
\Require PSD kernel operator $x\mapsto Lx$; target size $k$; number of samples
         $r\ge1$; optional monotone refinement budget $b\ge0$;
         optional verified eigenvalue upper bounds
         $\bar\lambda_i\ge\lambda_i(L)$.
\State Compute a leading basis $U=[u_1,\dots,u_k]$ with a block eigensolver
       and fix its tie-breaking rule if $k<n$ and
       $\lambda_k=\lambda_{k+1}$.
\State If verified bounds are available, set
       $\bar U_k\leftarrow\sum_{i=1}^k\log\bar\lambda_i$;
       otherwise set $\widehat U_k$ to the sum of the computed Ritz-value logs.
\For{$j=1,\dots,r$}
  \State Draw $S_j\sim\mathrm{DPP}(UU\T)$ using an exact-law
         projection-DPP sampler.
  \State Evaluate $f_j\leftarrow\log\det(L_{S_j})$.
\EndFor
\State $S\leftarrow S_{j^\star}$ where $j^\star\in\arg\max_j f_j$.
\State Optionally apply at most $b$ determinant-increasing swaps to $S$.
\State With verified bounds, set
       $\gap_{\mathrm{safe}}(S)\leftarrow\bar U_k-\log\det(L_S)$;
       otherwise set
       $\widehat\gap(S)\leftarrow\widehat U_k-\log\det(L_S)$.
\State \Return $S$, $\log\det(L_S)$, and either rigorous certificate
       $\gap_{\mathrm{safe}}(S)$ or explicitly labeled numerical estimate
       $\widehat\gap(S)$.
\end{algorithmic}
\end{algorithm}

The sampling guarantees in \cref{thm:rounding} apply to the exact leading
subspace.  If the computed basis is approximate, its projection DPP is still
sampled exactly conditional on that basis, but transferring the theorem to the
true leading subspace requires a separate subspace/distribution perturbation
bound.  The a posteriori certificate remains rigorous whenever the scalar
ceiling uses verified upper bounds, regardless of how the subset was produced.

\subsection{Exact sequential projection-DPP sampling}

For completeness, \cref{alg:projection-sampler} gives the sequential sampler
used in the experiments.  At an iteration where $V$ has $t$ orthonormal
columns, the residual leverage masses satisfy
\begin{equation}
  \sum_{i=1}^n\|e_i\T V\|_2^2
  =\tr(VV\T)=\tr(V\T V)=t.
  \label{eq:residual-mass-sum}
\end{equation}
Therefore $\ell_i/t$ is a valid categorical distribution.  The basis update
conditions the projection DPP on the selected item and reduces its rank from
$t$ to $t-1$.

\begin{algorithm}[H]
\caption{Sequential exact sampler for a rank-$k$ projection DPP}
\label{alg:projection-sampler}
\begin{algorithmic}[1]
\Require $U\in\R^{n\times k}$ with $U\T U=I_k$.
\State $V\leftarrow U$; $S\leftarrow\varnothing$.
\For{$t=k,k-1,\ldots,1$}
  \State $\ell_i\leftarrow\|e_i\T V\|_2^2$ for $i=1,\ldots,n$.
  \State Draw item $i$ with probability $\ell_i/t$; set
         $S\leftarrow S\cup\{i\}$.
  \If{$t>1$}
    \State Draw pivot column $j\in[t]$ with probability
           $V_{ij}^2/\ell_i$.
    \State Let $V_{-j}$ denote $V$ without column $j$ and set
           $W\leftarrow V_{-j}-V_{:j}(V_{i,-j}/V_{ij})$.
    \State Replace $V$ by an orthonormal basis for $\operatorname{col}(W)$
           using a thin QR factorization.
  \EndIf
\EndFor
\State \Return the sorted set $S$.
\end{algorithmic}
\end{algorithm}

The pivot in line 6 has $V_{ij}\ne0$ with probability one under its sampling
rule, so the division is defined.  Moreover, row $i$ of $W$ is exactly zero:
\begin{equation}
  W_{i,:}=V_{i,-j}-V_{ij}\frac{V_{i,-j}}{V_{ij}}=0.
  \label{eq:selected-row-zero}
\end{equation}
Consequently the selected item cannot be drawn again.  Orthogonalizing $W$
does not change its column space, so it does not change the conditional
projection kernel.  Repeating until the residual rank is zero returns exactly
$k$ distinct indices.  The standard projection-DPP sampling identity shows
that the probability of the final unordered set is $\det(U_S)^2$.

In the worked example of \cref{ex:three-by-two}, every initial row has
$\ell_i=2/3$.  Since $t=2$, the first item is selected with probability
$(2/3)/2=1/3$.  Conditional on selecting item 1 first, symmetry gives
probability $1/2$ to item 2 and $1/2$ to item 3 at the second stage.  The two
possible orders for each unordered pair add to total pair probability $1/3$,
matching the determinant calculation in the table.

An implementation using structured rank-one orthogonalization updates costs
$O(nk^2)$ for one standard sample.  A literal implementation that recomputes
a full thin QR at every stage can cost up to $O(nk^3)$ and is best regarded as
a transparent reference implementation.  Faster rejection-based methods have
expected cost $O(nk+k^3\log k)$, with additional samples from the same
projection DPP costing expected $O(k^3\log k)$ after preprocessing
\citep{barthelme2023faster}.  Other exact or approximate samplers can be used
within the pipeline \citep{gautier2017zonotope,derezinski2019fast}; however,
the exact distributional guarantees in \cref{thm:rounding} apply literally
only when the output law is $p_U$.

\paragraph{Eigenspace computation.}
Block Lanczos, LOBPCG, or randomized subspace iteration requires only products
with $L$.  If one block matvec with $k$ vectors costs $T_L(k)$, $t$ iterations
cost approximately
\begin{equation}
  O\!\left(t\,[T_L(k)+nk^2]\right),
  \label{eq:eigenspace-cost}
\end{equation}
where the $nk^2$ term accounts for block orthogonalization.  For
$L=\Phi\Phi\T+\epsilon I$ with $\Phi\in\R^{n\times d}$, the product
$LX=\Phi(\Phi\T X)+\epsilon X$ costs $O(ndk)$.  The $n\times n$ kernel need
not be formed.  In a row-sharded implementation, forming $\Phi\T X$ requires a
$d\times k$ reduction; this communication cost should not be confused with
the smaller $k\times k$ Rayleigh quotient.

\subsection{End-to-end cost and data requirements}
\Cref{tab:complexity} separates costs that are sometimes conflated.  The
rounding stage is linear in $n$ at fixed $k$, but the eigensolve depends on
the cost and convergence rate of the kernel operator.  Evaluating the final
subset requires access only to the $k\times k$ principal submatrix.

\begin{table}[t]
\centering
\small
\begin{tabular}{p{.25\linewidth}p{.28\linewidth}p{.37\linewidth}}
\toprule
Stage & Representative cost & Required object \\
\midrule
Top-$k$ eigenspace & $O(t[T_L(k)+nk^2])$ & Block products $LX$ \\
Optimized sequential sample & $O(nk^2)$ & $U\in\R^{n\times k}$ \\
Reference re-QR sample & up to $O(nk^3)$ & $U\in\R^{n\times k}$ \\
Candidate log determinant & $O(k^3)$ after entries are read & $L_S$ \\
Naive full one-swap sweep & $O(k(n-k)k^3)$ & Candidate principal minors \\
One-swap sweep with updates & about $O(k(n-k)k^2)$ & Cholesky factors and kernel entries \\
\bottomrule
\end{tabular}
\caption{\textbf{Representative computational costs.}  Here $t$ is the
number of block-eigensolver iterations and $T_L(k)$ is the cost of multiplying
$L$ by an $n\times k$ block.  Exact constants and update costs depend on data
access and numerical implementation.}
\label{tab:complexity}
\end{table}

\paragraph{Local refinement and fair accounting.}
The theorem requires no local search.  A swap stage is optional and can only
improve the determinant.  Its cost must nevertheless be reported separately.
A naive sweep over a candidate pool of size $m$ costs $O(mk^3)$ determinant
work per replaced position; Cholesky rank-one updates reduce the per-candidate
cost to $O(k^2)$.  In experiments comparing initializers, we give every method
the same refinement budget so that improvements are not incorrectly attributed
to the continuous relaxation.

\subsection{Implementation checklist}
A careful implementation can be organized as the following verifiable steps.
\begin{enumerate}[leftmargin=*,itemsep=4pt]
  \item \textbf{Validate the target.} Check $1\le k\le n$ and confirm that
        at least $k$ eigenvalues are positive at the working tolerance.
  \item \textbf{Compute and orthogonalize the leading basis.} Form $U$ and
        check $\|U\T U-I_k\|$; reorthogonalize when necessary.
  \item \textbf{Construct the spectral ceiling.} Use verified upper bounds
        for the leading eigenvalues if a rigorous certificate is required.
  \item \textbf{Round.} Run \cref{alg:projection-sampler} one or more times,
        recording the random seed and whether the sampler is exact or
        approximate.
  \item \textbf{Evaluate stably.} Build $L_S$, compute its Cholesky factor,
        and obtain the log determinant from the diagonal rather than forming
        a determinant directly.
  \item \textbf{Refine monotonically.} Accept a swap only after verifying that
        it increases the full-kernel log determinant.
  \item \textbf{Report both numbers.} Return the achieved value
        $\log\det(L_S)$ and the gap $U_k-\log\det(L_S)$; do not report the gap
        without identifying the upper bound used to compute it.
\end{enumerate}

\subsection{Numerically valid certificates}\label{sec:numerical-cert}

Theorems above use exact leading eigenvalues.  A floating-point Ritz value is
not automatically a rigorous upper bound.  A production implementation that
advertises a mathematical certificate should use verified eigenvalue upper
bounds from the eigensolver.  More generally, suppose a matrix-free
approximation $\widetilde L$ satisfies
$\|L-\widetilde L\|_2\le\varepsilon_L$.  Weyl's inequality gives
\[
  \lambda_i(L)\le\lambda_i(\widetilde L)+\varepsilon_L,
\]
and therefore the conservative certificate
\begin{equation}
  U_k^{\mathrm{safe}}
  :=\sum_{i=1}^k
       \log\bigl(\lambda_i(\widetilde L)+\varepsilon_L\bigr)
  \ge U_k.
  \label{eq:safe-certificate}
\end{equation}
If no verified residual or kernel-approximation bound is available, the reported
gap should be labeled a numerical estimate rather than a rigorous certificate.

The subset determinant should also be evaluated in log space.  If
$L_S=RR\T$ is a Cholesky factorization with positive diagonal, then
\begin{equation}
  \log\det(L_S)=2\sum_{j=1}^k\log R_{jj}.
  \label{eq:cholesky-logdet}
\end{equation}
This avoids overflow and underflow in $\det(L_S)$.  A Cholesky failure may
indicate a singular subset, an indefinite input, or roundoff.  Adding a
diagonal ``jitter'' changes the optimization problem and must be declared as
part of the kernel rather than silently used only during evaluation.

Approximation affects the two guarantees differently.  Any valid upper bound
$U_k^{\mathrm{safe}}\ge\OPT_k$ certifies any returned subset, even if that
subset came from an approximate eigenspace.  In contrast, the entropy and
sampling guarantees in \cref{cor:minentropy-gap,thm:rounding} concern the
exact leading subspace of $L$; using an approximate $U$ requires a separate
perturbation or distributional-error analysis if those guarantees are to be
claimed rigorously.

\section{Experiments}\label{sec:experiments}

The experiments are designed to test the claims proved above.  They do not
attempt to establish downstream model-quality gains.  All experiments use
Python 3.14.6, NumPy 2.5.1, SciPy 1.18.0, and Matplotlib 3.11.1 in double
precision, using Apple's Accelerate BLAS/LAPACK backend on macOS 26.4 and fixed
random seeds.  The complete script is
\path{certificate_experiments.py}, included with the source; it writes raw
small-instance and timing results to CSV and records the full NumPy build
configuration in \path{certificate_environment.txt}.  Exact package versions
are also listed in \path{certificate_requirements.txt}.  The script validates
the projection-DPP sampler against enumerated probabilities on a $7$-item instance.
All experimental spectral ceilings below are computed by
\texttt{numpy.linalg.eigh}; because those values are not outward-rounded, we
treat the resulting gaps as numerical estimates rather than rigorous
certificates.

\subsection{Controlled roundability geometry}\label{sec:geometry-exp}

We construct a rank-$k$ family where the eigenspace localization can be varied
without changing the nonzero eigenvalues.  Set $k=5$, divide $n=25$ coordinates
into five disjoint groups of five, and give column $j$ of $U$ support only on
group $j$.  Within each group, one row receives squared mass $c$ and the other
four receive $(1-c)/4$.  The columns are orthonormal for every
$c\in[0.2,1]$.  We use nonzero eigenvalues
$(9,6,4,2.5,1.5)$ and draw 500 samples per setting.

Because the column supports are disjoint, a nonsingular size-$k$ row subset
must select exactly one row from each group.  The projection-DPP distribution
therefore factorizes into $k$ categorical distributions with weights
\begin{equation}
  w(c)=\left(c,\frac{1-c}{4},\frac{1-c}{4},
                 \frac{1-c}{4},\frac{1-c}{4}\right).
  \label{eq:grouped-weights}
\end{equation}
For $c\ge1/5$, the most likely subset chooses the first row from every group,
so its probability is $c^k$ and
\begin{equation}
  H_\infty(p_U)=-k\log c.
  \label{eq:grouped-minentropy}
\end{equation}
Independence across the group choices also gives
\begin{equation}
  H(p_U)=k\left[-c\log c
    -(1-c)\log\left(\frac{1-c}{4}\right)\right].
  \label{eq:grouped-shannon}
\end{equation}
These closed forms are the reference curves used in the first two panels.

\begin{figure}[t]
\centering
\includegraphics[width=\linewidth]{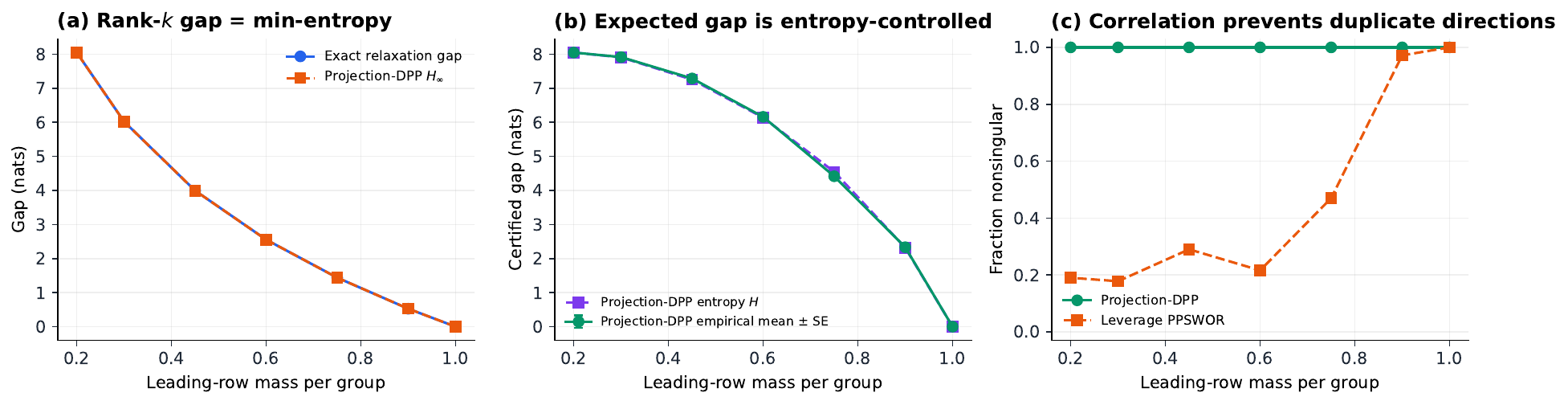}
\caption{\textbf{Eigenspace entropy, not eigenvalue decay, controls rounding.}
\textbf{(a)} For rank-$k$ kernels, the exact relaxation gap coincides with the
projection-DPP min-entropy, as predicted by
\cref{cor:minentropy-gap}.  \textbf{(b)} The empirical mean of the directly
evaluated gap $U_k-\log\det(L_S)$, with standard-error bars, coincides with the
Shannon entropy.  \textbf{(c)} Projection-DPP correlation always selects one
independent direction from each group; leverage-score
probability-proportional-to-size sampling without replacement (PPSWOR) often
selects several rows from one group and misses another, producing a singular
rank-$k$ submatrix.  PPSWOR is a weighted fixed-size comparator, not an
independent design with projection-DPP marginals.}
\label{fig:geometry}
\end{figure}

\Cref{fig:geometry}(a) verifies the exact rank-$k$ identity
$U_k-\OPT_k=H_\infty(p_U)$.  At $c=1$, the eigenspace is coordinate-aligned
and the gap is zero.  At $c=0.2$, its volume is evenly distributed among
$5^5$ nonsingular coordinate bases and the gap is
$\log(5^5)\approx8.05$ nats, even though the eigenvalues are unchanged.
Panel (b) verifies the expected-gap identity in the rank-$k$ case.  Panel (c)
also shows why replacing a projection DPP with leverage-weighted PPSWOR is not
innocuous: weighting individual items by leverage does not retain the joint
volume constraint.  We make no claim that this comparator preserves the
projection-DPP first-order inclusion probabilities.

\subsection{Exact small-instance comparisons}\label{sec:exact-exp}

We next generate 30 dense $18\times18$ PSD kernels with $k=5$, eigenvalues
$(12,7,4,2,1,0.9,\ldots,0.03)$, and Haar-random eigenvectors.  Since
$\binom{18}{5}=8568$, exhaustive enumeration gives the exact discrete optimum.
We compare standard greedy, column-pivoted QR on $U\T$, the best of 32
leverage PPSWOR samples, and the best of 32 projection-DPP samples.  Each
initializer receives the same single full one-swap refinement sweep.  We fix
the 30 matrix seeds to $0,\ldots,29$ before evaluation.  Table entries aggregate
gaps to exact OPT; lower is better.

More precisely, for each seed we draw a standard Gaussian matrix, compute its
QR factorization $Q$, and set $L=Q\diag(\lambda)Q\T$.  For an initializer
$S^{(0)}$ and its refined output $S^{(1)}$, we compute
\begin{equation}
  \OPT_k-\log\det(L_{S^{(0)}})
  \quad/\quad
  \OPT_k-\log\det(L_{S^{(1)}}).
  \label{eq:experiment-gap-pair}
\end{equation}
These gaps can be computed only because exhaustive enumeration is feasible at
this small size.

\begin{table}[H]
\centering
\small
\setlength{\tabcolsep}{7pt}
\begin{tabular}{lcccc}
\toprule
Gap to exact OPT & Greedy & QRCP & PPSWOR-32 & \OurMethod-32 \\
\midrule
Before refinement & $.049\pm.069$ & $.144\pm.160$ &
  $.400\pm.208$ & $.233\pm.214$ \\
After refinement & $.012\pm.026$ & $.031\pm.052$ &
  $.049\pm.082$ & $.028\pm.051$ \\
\bottomrule
\end{tabular}
\caption{\textbf{Exact small-instance comparison over 30 prespecified seeds.}
Entries are mean $\pm$ sample standard deviation in nats relative to exhaustive
MAP.  All methods receive the same one-sweep refinement budget.}
\label{tab:exact}
\end{table}

Three conclusions matter.  First, greedy is strongest on average in this dense
small-instance family; \OurMethod\ is not claimed to dominate every
initializer.  Second, equal swap budgets substantially reduce dependence on
initialization.  Third, the numerical spectral ceiling is conservative here:
its mean excess over OPT is $3.651\pm0.407$ nats, while the refined
\OurMethod\ output is only $0.028\pm0.051$ nats below OPT.  Consequently the
mean numerical gap estimate for that output is $3.679\pm0.420$ nats.  These
values are numerical estimates from \texttt{numpy.linalg.eigh}, not
outward-rounded certificates; the experiment exposes both looseness and
floating-point status rather than concealing either.

\subsection{Scaling of exact-law projection-DPP rounding}\label{sec:scaling-exp}

We isolate the rounding stage by generating an orthonormal
$U\in\R^{n\times k}$ and timing one sample from the standard sequential
exact-law projection-DPP sampler.  The experiment uses an Apple M4 Max CPU,
performs two untimed warm-up samples for each $(n,k)$ pair, and reports the
median and interquartile range of ten timed runs.  It excludes eigenspace
construction and optional local swaps.  The experiment script implements the
transparent re-QR update in \cref{alg:projection-sampler}; the purpose is to
measure the dependence on $n$ at fixed $k$, not to claim an optimized
dependence on $k$.

\begin{figure}[t]
\centering
\includegraphics[width=.64\linewidth]{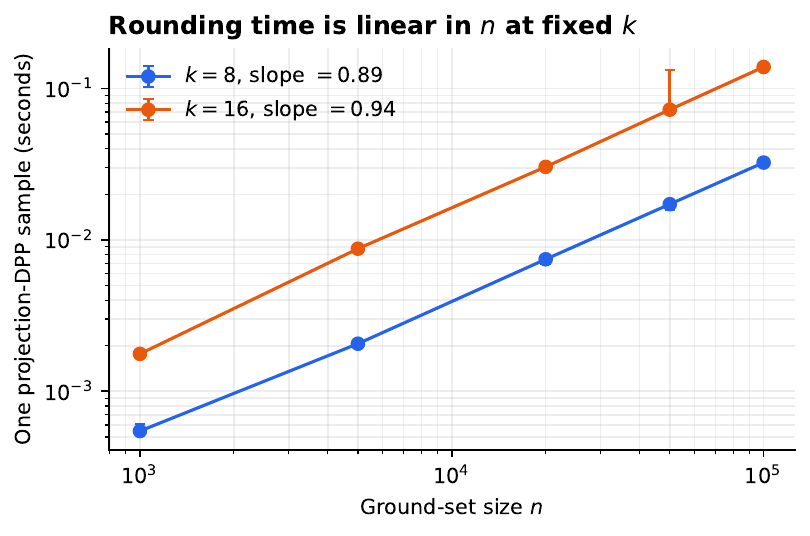}
\caption{\textbf{Scaling of the exact-law rounding stage.} Points and error
bars are medians and interquartile ranges over ten runs after two warm-ups.
The fitted log--log slopes are $0.89$ for $k=8$ and $0.94$ for $k=16$.  At
$n=100{,}000$, median times are $0.032$ seconds and $0.139$ seconds,
respectively.  The plot excludes leading-eigenspace computation.}
\label{fig:scaling}
\end{figure}

The fitted slopes in \cref{fig:scaling} are close to the predicted linear
dependence on $n$ over the tested range and implementation.  The eigenspace
computation will dominate when kernel matvecs are expensive; repeated sampling
reuses the same computed basis $U$.

\section{Discussion and Limitations}\label{sec:limitations}

\paragraph{The certificate may be loose.}
The worst-case width $\log\binom nk$ is large when $k\ll n$.  The min-entropy
bound is more informative, but computing $\max_Sp_U(S)$ is itself a
maximum-volume submatrix problem.  In practice, $\gap(S)$ is the immediately
computable diagnostic; entropy and min-entropy can be estimated or bounded to
explain why the diagnostic is tight or loose.

\paragraph{Sampling is not MAP optimization.}
A projection DPP samples in proportion to coordinate volume; it does not
directly maximize that volume.  Repetition increases the chance of seeing a
high-probability subset, and local swaps improve the realized determinant, but
neither step changes the NP-hardness of exact MAP.  The contribution is a
constructive certificate-relative guarantee, not a polynomial-time exact
solver.

\paragraph{The leading subspace can miss tail-assisted subsets.}
For full-rank $L$, the spectral tail $R_k$ may make a subset attractive even
when $p_U(S)$ is modest.  The pointwise theorem remains valid because the tail
only increases the determinant, but an algorithm based solely on $U$ may not
rank such subsets well.  Greedy or swap refinement on the full kernel is a
useful complement.

\paragraph{Approximate kernels need error accounting.}
Nystr\"om and random-feature kernels change both the eigenvalues and the
discrete objective.  A certificate for the approximation is not automatically
a certificate for the original kernel.  A spectral-norm error bound such as
\eqref{eq:safe-certificate}, or a clearly labeled approximate certificate, is
required.

\paragraph{Empirical scope.}
The present experiments verify the mathematical mechanisms, exact small-case
behavior, and rounding-stage scaling.  They do not establish downstream gains
for language-model curation, active learning, recommendation, or experimental
design.  Such application studies require domain-specific kernels, quality
weights, and comparisons with specialized selection baselines.

\section{Conclusion}\label{sec:conclusion}

The Stiefel relaxation of determinant MAP selection is exactly solvable, and
its optimum is the classical spectral upper bound $U_k$.  That fact removes
the need for a new continuous solver but does not remove the rounding problem.
The leading eigenspace carries a projection-DPP distribution whose probabilities
measure coordinate volume.  Relating those probabilities to the original
principal minors yields a pointwise determinant floor, an exact rank-$k$
min-entropy characterization of the relaxation gap, and expected and
high-probability guarantees for projection-DPP rounding.

The resulting message is deliberately modest and useful.  Spectral information
provides an efficiently computable ceiling; projection-DPP sampling provides a
constructive way to turn its eigenspace into a subset; local search can improve
the subset without invalidating the theory; and the final spectral gap reports
how much optimality has actually been certified.  When the gap is large, the
method says so.  Understanding and reporting that roundability gap is the main
purpose of the spectral relaxation.

\appendix
\section{Additional Proof Details}\label{app:proofs}

\subsection{Cauchy--Binet normalization}

\begin{lemma}[The projection volumes sum to one]
\label{lem:cauchy-binet-normalization}
If $U\in\R^{n\times k}$ and $U\T U=I_k$, then
\begin{equation}
  \sum_{S\subseteq[n],\ |S|=k}\det(U_S)^2=1.
  \label{eq:cauchy-binet-appendix}
\end{equation}
\end{lemma}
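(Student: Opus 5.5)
The plan is to apply the Cauchy--Binet formula to the product $U\T U$, where $U\T\in\R^{k\times n}$ and $U\in\R^{n\times k}$. Cauchy--Binet expands the determinant of a product of a $k\times n$ matrix and an $n\times k$ matrix as a sum over $k$-element column/row subsets:
\[
  \det(U\T U)=\sum_{|S|=k}\det\bigl((U\T)_{:,S}\bigr)\det(U_S).
\]
Here $(U\T)_{:,S}$ is the $k\times k$ submatrix of $U\T$ with columns indexed by $S$. We identify it with $(U_S)\T=(E_S\T U)\T=U\T E_S$, using the row-selection notation of \eqref{eq:selector-definition}. Since transposition preserves determinants, each summand equals $\det(U_S)^2$.

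On the left side, the hypothesis $U\T U=I_k$ gives $\det(U\T U)=\det(I_k)=1$, which yields \eqref{eq:cauchy-binet-appendix}. Both sides are taken with indices in increasing order inside $S$. Any consistent ordering would in fact do, because the two factors would pick up the same sign and their product is a square.

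The only step needing care is the precise statement of Cauchy--Binet, so I would record it for completeness. It can be proved by multilinearity: write each column of $U\T U$ as $\sum_{i} u_{i}^{(\cdot)}\,(U\T e_i)$ and expand $\det$ column by column. Terms with a repeated index vanish. Terms with distinct indices regroup by their underlying set $S$, and a sign-of-permutation sum then reassembles $\det(U_S)$. This is standard, and I would cite it rather than grind through it.

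There is no real obstacle here. The one point to flag is that the formula needs $k\le n$, which holds by assumption, so that $\mathcal S_k$ is nonempty. It also needs the conventions $U_S=E_S\T U$ and $(U\T)_{:,S}=U\T E_S$, so that the two minors are genuinely transposes of one another.
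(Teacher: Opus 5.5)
Your proposal is correct and matches the paper's proof: apply Cauchy--Binet with $A=U\T$ and $B=U$, identify $(U\T)_{:,S}=U_S\T$ so that each summand is $\det(U_S)^2$, and evaluate $\det(U\T U)=\det(I_k)=1$. One small aside: $k\le n$ does not need to be assumed separately, because the hypothesis $U\T U=I_k$ already forces $\rank(U)=k$ and hence $k\le n$.
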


\begin{proof}
The Cauchy--Binet formula states that for
$A\in\R^{k\times n}$ and $B\in\R^{n\times k}$,
\begin{equation}
  \det(AB)=\sum_{|S|=k}\det(A_{:,S})\det(B_{S,:}).
  \label{eq:cauchy-binet-general}
\end{equation}
Set $A=U\T$ and $B=U$.  Then $A_{:,S}=U_S\T$ and
$B_{S,:}=U_S$, so each summand is
$\det(U_S\T)\det(U_S)=\det(U_S)^2$.  The left-hand side is
$\det(U\T U)=\det(I_k)=1$.
\end{proof}

\subsection{The entropy chain}

\begin{lemma}[Min-entropy, Shannon entropy, and support size]
\label{lem:entropy-chain}
For any probability distribution $p$ supported on at most $N$ outcomes,
\begin{equation}
  0\le -\log\max_x p(x)\le-\sum_x p(x)\log p(x)\le\log N.
  \label{eq:entropy-chain-appendix}
\end{equation}
\end{lemma}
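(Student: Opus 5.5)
The plan is to prove the three inequalities separately, each from an elementary property of the logarithm. Write $p_{\max}:=\max_x p(x)$ and let $\mathcal X$ be the support of $p$, so that $|\mathcal X|\le N$. Throughout we use the convention $0\log0=0$; this lets every sum be taken over $\mathcal X$ only.

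\emph{First inequality.} Since $p$ is a probability distribution, $0<p_{\max}\le1$. Hence $-\log p_{\max}\ge0$, which is the leftmost inequality.

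\emph{Middle inequality.} For every $x\in\mathcal X$ we have $p(x)\le p_{\max}$, and $-\log$ is decreasing, so $-\log p(x)\ge-\log p_{\max}$. Multiplying by $p(x)\ge0$ and summing over $\mathcal X$ gives
\[
  -\sum_x p(x)\log p(x)\ \ge\ -\log p_{\max}\sum_{x\in\mathcal X}p(x)=-\log p_{\max}.
\]
In words, the Shannon entropy is the average of the surprisal $-\log p(x)$, and an average is at least the minimum value it averages.

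\emph{Last inequality.} This is the step that needs a genuine tool, namely Jensen's inequality for the concave function $\log$. Apply it to the random variable $1/p(X)$ with $X\sim p$:
\[
  \sum_{x\in\mathcal X}p(x)\log\frac1{p(x)}
  \ \le\ \log\sum_{x\in\mathcal X}p(x)\,\frac1{p(x)}
  =\log|\mathcal X|\le\log N.
\]
Alternatively, the same bound follows from nonnegativity of the Kullback--Leibler divergence between $p$ and the uniform distribution on $\mathcal X$, which equals $\log|\mathcal X|-H(p)$.

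The only point needing care is the treatment of zero-probability outcomes. These are excluded by restricting the sums to the support, so that $1/p(x)$ is always defined. With that restriction the argument is routine.
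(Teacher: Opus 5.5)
Your proof is correct and follows the paper's argument. The paper gets the first two inequalities the same way, from $p(x)\le p_{\max}\le 1$ followed by averaging the surprisal, and it gets the last one from $D_{\mathrm{KL}}(p\|u)\ge 0$ against the uniform distribution on $N$ outcomes. That is exactly your stated alternative, and your Jensen step is the same fact; restricting to the support just gives the slightly sharper intermediate bound $\log|\mathcal X|$.
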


\begin{proof}
Let $p_{\max}=\max_xp(x)$.  Since $p(x)\le p_{\max}\le1$ on the support,
$-\log p(x)\ge-\log p_{\max}\ge0$.  Taking the expectation under $p$ gives
$H(p)\ge H_\infty(p)\ge0$.  For the upper bound, let $u$ be uniform on a set
of $N$ outcomes, extending $p$ by zeros if needed.  Nonnegativity of relative
entropy gives
\begin{align}
  0\le D_{\mathrm{KL}}(p\|u)
  &=\sum_xp(x)\log\frac{p(x)}{1/N}
    =-H(p)+\log N,
\end{align}
so $H(p)\le\log N$.
\end{proof}

\subsection{Why the sequential sampler is exact}

\begin{proposition}[Conditional projection update]
\label{prop:conditional-projection}
Let $K=VV\T$ be an orthogonal projector of rank $t$, and suppose
$K_{ii}>0$.  Conditional on including item $i$, the remaining items form a
rank-$(t-1)$ projection DPP with projector
\begin{equation}
  K^{(i)}=K-\frac{K_{:i}K_{i:}}{K_{ii}}.
  \label{eq:conditional-projector}
\end{equation}
The matrix $W$ in line 7 of \cref{alg:projection-sampler} spans the range of
$K^{(i)}$.
\end{proposition}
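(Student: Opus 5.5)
The proof has three parts: the conditional law of a DPP given that item $i$ is included, a check that the resulting kernel is again an orthogonal projector of rank $t-1$, and an identification of its range with $\operatorname{col}(W)$.

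\emph{Step 1: conditioning.} For a DPP with marginal kernel $K$, inclusion probabilities are principal minors: $\Pr(A\subseteq Y)=\det(K_A)$. For any finite $B$ with $i\notin B$, apply the Schur complement to the block $\{i\}\cup B$:
\[
\det(K_{\{i\}\cup B})=K_{ii}\det\!\bigl(K_B-K_{B,i}K_{ii}^{-1}K_{i,B}\bigr).
\]
This gives $\Pr(B\subseteq Y\mid i\in Y)=\det\bigl(K^{(i)}_B\bigr)$, with $K^{(i)}$ as in \eqref{eq:conditional-projector}. Because the inclusion minors determine the law, the remaining items $Y\setminus\{i\}$ form a DPP with kernel $K^{(i)}$ restricted to $[n]\setminus\{i\}$.

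\emph{Step 2: $K^{(i)}$ is a rank-$(t-1)$ projector.} Write $v=Ke_i=K_{:i}$. Since $K^2=K$ and $K=K\T$, we have $e_i\T K^2e_i=e_i\T Ke_i$, so $\|v\|^2=K_{ii}$. Hence $q:=v/\sqrt{K_{ii}}$ is a unit vector in $\operatorname{range}(K)$, and
\[
K^{(i)}=K-qq\T .
\]
Subtracting a rank-one projector onto a unit vector inside the range of a projector leaves the orthogonal projector onto the orthogonal complement of $q$ within $\operatorname{range}(K)$. That projector has rank $t-1$. Its $i$-th row is $K_{i:}-K_{ii}K_{i:}/K_{ii}=0$, so item $i$ is never selected again. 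This agrees with the standard fact that a projection DPP of rank $t-1$ has exactly $t-1$ points almost surely.

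\emph{Step 3: $\operatorname{col}(W)$.} We have $q\propto VV\T e_i=V(V_{i,:})\T$. By Step 2, $\operatorname{range}(K^{(i)})=\{Vx: x\perp V_{i,:}\T\}$, using that $V$ has orthonormal columns so orthogonality to $q$ in $\R^n$ is orthogonality to $V_{i,:}\T$ in the coefficients. Each column of $W$ is $V(e_m-e_j\,V_{im}/V_{ij})$ for $m\ne j$. The coefficient vectors satisfy $V_{i,:}\,(e_m-e_jV_{im}/V_{ij})=V_{im}-V_{im}=0$, so every column of $W$ lies in $\operatorname{range}(K^{(i)})$. These $t-1$ coefficient vectors are linearly independent, since their restrictions to the coordinates $m\ne j$ form the identity. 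Because $V$ is injective, $\operatorname{col}(W)$ has dimension $t-1$. It therefore equals $\operatorname{range}(K^{(i)})$, and QR of $W$ yields an orthonormal basis $V'$ with $V'V'^{\top}=K^{(i)}$.

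The main obstacle is the rigor of Step 1: the statement that inclusion minors characterize the law. I would handle it by citing the standard uniqueness of the DPP law given its marginal kernel \citep{kulesza2012dpp}.
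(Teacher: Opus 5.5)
Your proposal is correct and follows the paper's route. In both arguments, $K^{(i)}$ is $K$ minus the rank-one projector onto the normalized projection of $e_i$ into $\operatorname{col}(V)$, which leaves the projector onto $\{Va : e_i\T Va=0\}$; the columns of $W$ then span this subspace because they are $t-1$ independent vectors whose $i$-th row vanishes. Your Schur-complement computation of the conditional inclusion minors also proves the conditioning claim, which the paper only asserts as standard, and the uniqueness step you flag as the main obstacle is elementary on a finite ground set, since M\"obius inversion gives $\Pr(Y=A)=\sum_{B\supseteq A}(-1)^{|B\setminus A|}\Pr(B\subseteq Y)$.
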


\begin{proof}
The range of $K$ is $\operatorname{col}(V)$.  The vector
$K_{:i}=VV\T e_i$ is the projection of coordinate vector $e_i$ onto this
range.  Subtracting the rank-one projector
$K_{:i}K_{i:}/K_{ii}$ removes exactly that direction, leaving the
$(t-1)$-dimensional subspace
\begin{equation}
  \{Va:a\in\R^t,\ e_i\T Va=0\}.
  \label{eq:conditional-subspace}
\end{equation}
The columns of $W$ are linear combinations of columns of $V$, and
\eqref{eq:selected-row-zero} shows that they lie in this subspace.  The
chosen pivot is nonzero, and the $t-1$ columns are independent, so they span
the entire subspace.  Their orthonormalized basis therefore has projector
$K^{(i)}$.

For a projection DPP of rank $t$, the inclusion probability of $i$ is
$K_{ii}$.  Dividing by the fixed cardinality $t$ selects a uniformly random
member from the eventual set, which is why line 4 uses $K_{ii}/t$.  Applying
the conditional update recursively samples the remaining projection DPP.
Induction on $t$ then gives final unordered-set probability
$\det(K_S)=\det(V_S)^2$.
\end{proof}

\subsection{Determinant monotonicity}

\begin{lemma}[Determinant monotonicity used in \cref{thm:pointwise}]
If $A\succ0$ and $B\succeq A$, then $\det(B)\ge\det(A)$.
\end{lemma}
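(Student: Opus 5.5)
The plan is to reduce the claim to an eigenvalue comparison, using the same congruence trick that appears in step 3 of the proof of \cref{thm:pointwise}. Since $A\succ0$, it has a unique symmetric positive definite square root $A^{1/2}$, which is invertible. I would set $C:=B-A\succeq0$ and factor
\begin{equation*}
  B=A+C=A^{1/2}\bigl(I+A^{-1/2}CA^{-1/2}\bigr)A^{1/2}.
\end{equation*}
Multiplicativity of the determinant then gives
\begin{equation*}
  \det(B)=\det(A)\,\det\bigl(I+A^{-1/2}CA^{-1/2}\bigr).
\end{equation*}

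Next I would show that the second factor is at least one. The matrix $M:=A^{-1/2}CA^{-1/2}$ is a congruence of $C$, so it is symmetric and positive semidefinite: for any $x$, $x\T Mx=(A^{-1/2}x)\T C(A^{-1/2}x)\ge0$. Hence its eigenvalues $\nu_1,\ldots,\nu_k$ are all nonnegative. The eigenvalues of $I+M$ are $1+\nu_j\ge1$, so $\det(I+M)=\prod_j(1+\nu_j)\ge1$. Since $\det(A)>0$, multiplying gives $\det(B)\ge\det(A)$.

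No step here is a real obstacle. The only points needing care are that $A^{-1/2}$ exists, which uses strict positivity of $A$, and that congruence preserves positive semidefiniteness. As a remark, the case $A\succeq0$ singular is trivial: then $\det(A)=0\le\det(B)$, because $B\succeq0$. This is exactly how the $p_U(S)=0$ case is handled in \cref{thm:pointwise}.
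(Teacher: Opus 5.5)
Your proof is correct and follows the paper's argument: the same factorization $B=A^{1/2}\bigl(I+A^{-1/2}(B-A)A^{-1/2}\bigr)A^{1/2}$, followed by the observation that the middle factor has all eigenvalues at least one. Your explicit check that the congruence preserves positive semidefiniteness, and your remark on the singular case, are correct additions that the paper leaves implicit.
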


\begin{proof}
Write
$B=A^{1/2}(I+A^{-1/2}(B-A)A^{-1/2})A^{1/2}$.  The middle factor has all
eigenvalues at least one, so its determinant is at least one.  Therefore
$\det(B)\ge\det(A)$.
\end{proof}

\begin{remark}[Recovery of the classical counting floor]
Cauchy--Binet implies that at least one $S$ satisfies
$p_U(S)\ge1/\binom nk$.  Applying \cref{thm:pointwise} to that subset gives
\[
  \OPT_k\ge U_k-\log\binom nk.
\]
The projection-DPP proof is stronger than this averaging argument because it
provides a sampler whose expected and high-probability values obey parallel
bounds.
\end{remark}

\section{Reproducibility Details}\label{app:reproducibility}

The projection-DPP sampler in \texttt{certificate\_experiments.py} was checked
on a random $7\times3$ orthonormal matrix.  We enumerated all
$\binom73=35$ probabilities $\det(U_S)^2$, drew $30{,}000$ samples, and
observed maximum absolute empirical probability error $0.0025$.  The geometry
experiment uses seed 20260808; the exact-instance matrices use the 30
prespecified seeds $0,\ldots,29$; and the scaling experiment uses seed 777.
The geometry means use 500 samples per setting and report standard errors.  The
scaling study uses two untimed warm-ups and ten timed repetitions per setting,
reporting medians and interquartile ranges.  Figures and CSV files are
regenerated by running
\begin{quote}
\texttt{python3 certificate\_experiments.py}.
\end{quote}


\end{document}